\newcommand{\mat}[1]{\ensuremath{\boldsymbol{\mathbf{#1}}}}
\newcommand{\tensor}[1]{\ensuremath{\boldsymbol{\mathscr{{#1}}}}}
\newcommand{\hide}[1]{}
\newcommand{\tp}{\ensuremath{\tensor{P}}}
\newcommand{\tb}{\ensuremath{\tensor{B}}}
\newcommand{\tgk}{\ensuremath{\tensor{G}^{(k)}}}
\newcommand{\mg}{\ensuremath{\mat{G}}}
\newcommand{\tgh}{\ensuremath{\tilde{\tensor{G}}}}
\newcommand{\tgkh}{\ensuremath{\tilde{\tensor{G}}}^{(k)}}
\newcommand{\mgh}{\ensuremath{\tilde{\mat{G}}}}
\newcommand{\tm}{\ensuremath{\tensor{M}}}
\newcommand{\tmk}{\ensuremath{\tensor{M}^{k}}}
\newcommand{\tqk}{\ensuremath{\tensor{Q}^{k}}}
\newcommand{\tq}{\ensuremath{\tensor{Q} }}
\newcommand{\ts}{\ensuremath{\tensor{S}}}
\newcommand{\te}{\ensuremath{\tensor{E}}}
\newcommand{\mmk}{\ensuremath{\mat{M}_{[k]}^{k}}}
\newcommand{\matr}{\ensuremath{\mathbb{R}}}
\newcommand{\tx}{\ensuremath{\tensor{X}}}
\newcommand{\mx}{\ensuremath{\mat{X}}}
\newcommand{\mxk}{\ensuremath{\mat{X}_{[k]}}}
\newcommand{\txh}{\ensuremath{\tilde{\tensor{X}}}}
\newcommand{\mxhk}{\ensuremath{ \tilde{\mat{X}}_{[k]} }}
\newcommand{\ty}{\ensuremath{\tensor{Y}}}
\newcommand{\my}{\ensuremath{\mat{Y}}}
\theoremstyle{plain}
\newtheorem{definition}{Definition}
\theoremstyle{plain}
\newtheorem{theorem}{Theorem}
\title{Efficient Tensor Robust PCA under Hybrid Model of Tucker and Tensor Train}% Decomposition Format}
\begin{document}
		\author{Yuning~Qiu, Guoxu~Zhou, Zhenhao~Huang, Qibin~Zhao, \IEEEmembership{Senior Member, IEEE}, \\ and Shengli~Xie, \IEEEmembership{Fellow, IEEE}
		\thanks{This work is supported in part by Natural Science Foundation of China under Grant 61673124, Grant 61903095, Grant 61727810, and Grant 61973090. (\emph{Corresponding authors: Guoxu Zhou.})}
		\thanks{Y. Qiu, G. Zhou, Z. Huang, Q. Zhao and S. Xie are with the School of Automation, Guangdong University of Technology, Guangzhou 510006, China (e-mail: yuning.qiu.gd@gmail.com, gx.zhou@gdut.edu.cn,zhhuang.gdut@qq.com,qibin.zhao@riken.jp, shlxie@gdut.edu.cn).}
		\thanks{Q. Zhao is also with the Center for Advanced Intelligence Project (AIP), RIKEN, Tokyo, 103-0027, Japan (e-mail: qibin.zhao@riken.jp).}
%		\thanks{S. Xie is with the School of Automation, Guangdong University of Technology, Guangzhou 510006, China, and also with the Guangdong- Hong Kong-Macao Joint Laboratory for Smart Discrete Manufacturing, Guangdong University of Technology, Guangzhou 510006, China (e-mail: shlxie@gdut.edu.cn).}
	}
	\markboth{Journal of \LaTeX\ Class Files,~Vol.~14, No.~8, August~2015}%
	{Shell \MakeLowercase{\textit{et al.}}: Bare Demo of IEEEtran.cls for IEEE Journals}
\maketitle
\begin{abstract}\\	
Tensor robust principal component analysis \mbox{(TRPCA)} is a fundamental model in machine learning and computer vision. Recently, tensor train (TT) decomposition has been verified effective to capture the global low-rank correlation for tensor recovery tasks. However, due to the large-scale tensor data in real-world applications, previous TRPCA models often suffer from high computational complexity. In this letter, we propose an efficient TRPCA under hybrid model of Tucker and TT. Specifically, in theory we reveal that TT nuclear norm (TTNN) of the original big tensor can be equivalently converted to that of a much smaller tensor via a Tucker compression format, thereby significantly reducing the computational cost of singular value decomposition (SVD). Numerical experiments on both synthetic and real-world tensor data verify the superiority of the proposed model. 
\end{abstract}

\begin{IEEEkeywords}
Tensor analysis, robust tensor decomposition,  tensor train decomposition, tensor robust principal component analysis.
\end{IEEEkeywords}

\section{Introduction}

Tensor decomposition is a fundamental tool for multi-way data analysis \cite{TamaraG.KoldaBrettW.Bader,spm-cichocki-2015,zhang2017low}. In real-world applications, tensor data are often corrupted with sparse outliers or gross noises  \cite{inoue2009robust,Zare2018}. 
For example, face images recorded in practical applications might contaminate gross corruptions due to illumination and occlusion noise. To alleviate this issue,  tensor robust principal component analysis \cite{zhang2014novel,Goldfarb2014,Ren2017,Bahri2019a}  (TRPCA) or robust tensor decomposition \cite{Gu2014a,Zhang2016a,li2013robust}  (RTD) were proposed to estimate the underlying low-rank and sparse components from their sum.
%the observed tensor $\ty$  as $\ty=\tx+\ts$, where $\tx$ is a low-rank tensor and $\ts$ is a sparse tensor with a large fraction of entries being zero. 
In the past decades, it has been shown that the low-rank and sparse components can be exactly recovered by solving the following minimization problem:
\begin{equation}
\min_{\tx, \ts}  \| \tx  \|_{\varoast} + \tau \| \ts \|_1 , \text{ s.t. } \ty = \tx + \ts, 
\end{equation}
where $ \| \cdot \| _{\varoast}$ and $\| \cdot\|_1$ denote tensor nuclear norm and $\ell_1$ norm, respectively,  and $\tau >0$ is a hyper-parameter balancing the low-rank and sparse components.
% simultaneously estimate the low-rank and sparse corruptions from noisy observations.

The essential element of the TRPCA problem is to capture the high-order low-rank structure. Unlike the matrix case, there is no unified tensor rank definition due to the complex multilinear structure. 
The most straightforward tensor rank is CANDECOMP/PARAFAC (CP) rank, which is defined on the smallest number of rank-one tensors \cite{TamaraG.KoldaBrettW.Bader}.  Nevertheless, both CP rank and its convex surrogate CP nuclear norm are NP-hard to compute \cite{10.1145/2512329,Friedland2017}. To alleviate such issue,  Zhao et al. \cite{Zhao2016}  proposed a variational Bayesian inference framework for CP rank determination and applied it to the TRPCA problem. 
Compared with CP rank, Tucker rank is  more flexible and interpretable since it explores the low-rank structure in all modes. 
Tucker rank is defined as a set of ranks of unfolding matrices associated with each mode \cite{tucker1966some}. Motivated by the convex surrogate of matrix rank, the sum of the nuclear norm (SNN) was adopted as a convex relaxation for Tucker rank \cite{JiLiu2010b}. The work \cite{Goldfarb2014} proposed an SNN-based TRPCA model.   Huang et al.  \cite{huang2015provable} presented the exact recovery guarantee for SNN-based TRPCA. 
Recently, TT rank-based models have achieved both theoretical and practical performance better than Tucker rank models in the field of tensor recovery applications.  
Compared with Tucker rank, TT rank was demonstrated to capture global correlation of tensor entries using the concept of von Neumann entropy theory \cite{Bengua2017a}.  Gong et al. \cite{9200561} showed the potential advantages of TT rank by investigating the relationship between Tucker decomposition and TT decomposition. 
Similar to SNN, Bengua et al. \cite{Bengua2017a} proposed  TT nuclear norm  (TTNN) using the sum of nuclear norm of TT unfolding matrix along with each mode. 
Yang et al. \cite{Yang2020a}  proposed a TRPCA model based on TTNN   and applied it to tensor denoising tasks.  

%However, all of these methods rely on the nuclear norm minimization on multiple unfolding matrices, which dramatically increases the computational cost for high-order tensor data.  

It is significantly important to have efficient optimization algorithms for TRPCA problem. This is especially challenging for large-scale and high-order tensor data. 
In this letter, we propose an efficient TT rank-based TRPCA model which equivalently converts the TTNN minimization problem of the original tensor to that of a much smaller tensor. Thus, the computational complexity of TTNN minimization problem can be significantly reduced. 
To summarize, we make the following contributions:
\begin{itemize}
	\item We demonstrate that  the minimization of TTNN on the original big tensor can be equivalently converted to a much smaller tensor under a Tucker compression format. 
%	, which greatly accelerates the optimization algorithm.  
	\item We propose an efficient TRPCA model and develop an effective alternating direction method of multipliers (ADMM) based optimization algorithm. 
	\item We finally show that the proposed model achieves more promising recovery performance and less running time than the state-of-the-art models on both synthetic and real-world tensor data.
\end{itemize}

% 1. real-world data is noise 
% 2. PCA cannot deal with corrupted tensor
% 3. RPCA is proposed to solve this problem
% 4. TRPCA is more suitable
% 5. 
 
\section{Notation and Preliminary}

\subsection{Notations}
We adopt the notations used in \cite{TamaraG.KoldaBrettW.Bader}. The set $\{ 1,2,\cdots,K\}$ is denoted as $[K]$.  A scalar is given by a standard lowercases or uppercases letter $x, X \in \matr$. A matrix is given by a boldface capital letter $\mx \in \matr^{d_1\times d_2}$. A tensor is given by calligraphic letter $\tx \in \matr^{d_1\times d_2\times  \cdots d_K}$. The $(i_1,i_2,\cdots,i_K)$ entry of tensor $\tx$ is given by $X(i_1,i_2,\cdots,i_K)$.  The standard mode-$k$ unfolding \cite{TamaraG.KoldaBrettW.Bader} of tensor $\tx \in \matr^{d_1 \times d_2 \times \cdots \times d_K}$ is given by $\mx_{(k)} \in \matr^{d_k \times d_1 \cdots d_{k-1} d_{k+1} \cdots d_{K}} $, and the corresponding \mbox{matrix-tensor} folding operation is given by $\mcode{sfold}_k(\mx_{(k)})$. Another mode-$k$ unfolding for TT decomposition \cite{Wen2008} is denoted as $\mx_{[k]} \in \matr^{d_1 \cdots d_{k-1} \times d_k \cdots d_{K} } $, and the corresponding matrix-tensor folding operation is denoted as $ \mcode{fold}_k ( \mx_{[k]} ) $. 

\subsection{Tensor Train Decomposition}
 \begin{definition} [TT decomposition \cite{Wen2008}]
 	The tensor train (TT) decomposition represents a $K$th-order tensor $\ty \in  \mathbb{R}^{d_1 \times \cdots \times d_K}$ by the sequence multilinear product over a set of third-order core tensors, i.e., $\tensor{Y} = \emph{\mcode{TT}}( \tensor{G}^{(1)}, \cdots, \tensor{G}^{(K)} )$, where $\tensor{G}^{(k)} \in \mathbb{R}^{r_{k-1} \times d_k \times r_{k}}$, $k \in [K]$, and $r_{K} = r_{0} = 1$. Element-wisely, it can be represented as
 	\begin{equation}
 		 {Y}({i_1, i_2, \cdots, i_K}) = \sum_{\upsilon_0, \cdots , \upsilon_K }^{r_0, \cdots, r_K} \prod_{k=1}^{K}  {G}^{(k)} ( \upsilon_{k-1},  i_k,\upsilon_{k} ).
 	\end{equation}
 	The size of cores, $r_k, k=1,\cdots, K-1$, denoted by a vector $[r_1, \cdots, r_{K-1}]$, is called {TT rank}. 
 \end{definition}

%\begin{definition}[Mode-$k$ unfolding \cite{Wen2008}]
%	The mode-$k$ unfolding matrix of $\ty \in \matr^{d_1\times  d_2 \times \cdots \times d_K}$ is given by $\mat{T}_{[k]} $ of size ${\prod_{i=1}^{k-1} d_i  \times \prod_{j=k}^{K} d_j }$, whose elements are given by
%	\begin{equation}
%		Y_{[k]} (\overline{i_1,\cdots,i_{k-1} }, \overline{i_k, \cdots, i_{K}}) = Y(i_1,i_2,\cdots,i_K), 
%	\end{equation}
%where $\overline{i_1,\cdots,i_{k-1}} = i_1 + (i_2 -1) d_1 + (i_3 - 1) d_1 d_2 + \cdots + (i_{k-1}-1)d_1\cdots i_{k-2}$. 
%\end{definition}

\begin{definition} [TT nuclear norm \cite{Bengua2017a}] \label{def-ttnn}
	The tensor train nuclear norm (TTNN) is defined by the weighted sum of nuclear norm along each unfolding matrix:
	\begin{equation}
		\| \ty \|_{\emph{ttnn}} := \sum_{k=1}^{K-1} \alpha_k \| \my_{[k]} \|_*,
	\end{equation}
where $\| \cdot \|_*$ is the matrix nuclear norm, $\alpha_k   > 0$ denotes the weight of mode-$k$ unfolding.
\end{definition}

\section{Efficient TRPCA under Hybrid Model of Tucker and TT}
\subsection{Fast TTNN Minimization under a Tucker Compression}

In the following theorem, we show that TT decomposition can be equivalently given in a Tucker compression format. 
% that the TTNN can be converted to a much smaller tensor
% and also demonstrate the equivalent relationship of TTNN in a heterogeneous tensor decomposition framework.  
\begin{theorem} \label{theorem-rank-property}
Let  $\tx \in \matr^{d_1\times d_2 \times \cdots \times d_K}$ be a $K$th-order tensor with TT rank $[r_1, \cdots r_{K-1} ]$, it can be formulated as the following Tucker decomposition format:
	\begin{equation}
		\tx = \txh \times_1 \mat{U}_1 \times_2 \cdots \times_K \mat{U}_K, \mat{U}_k \in St(d_k, R_k), 
	\end{equation}
where $\txh =\emph{\mcode{TT}}(\tgh^{(1)}, \tgh^{(2)},\cdots,\tgh^{(K)} )$ denotes the core tensor,  $ \tgkh \in \matr^{r_k \times R_k \times r_{k+1} }$, and $St(d_k, R_k):= \{ \mat{U} \in \matr^{d_k \times R_k},  \mat{U}^{\top} \mat{U} = \mat{I}_{R_k} \}$ is the Stiefel manifold. 
\end{theorem}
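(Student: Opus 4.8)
The plan is to compress each TT core along its physical mode, extracting an orthonormal (Stiefel) factor while leaving the bond indices untouched, so that the remaining smaller cores still form a valid TT decomposition. Concretely, I would start from the element-wise TT representation
\begin{equation}
  X(i_1,\ldots,i_K) = \sum_{\upsilon_0,\ldots,\upsilon_K} \prod_{k=1}^{K} G^{(k)}(\upsilon_{k-1}, i_k, \upsilon_k),
\end{equation}
and, for each mode $k \in [K]$, treat the physical index $i_k$ of the core $\tgk \in \matr^{r_{k-1}\times d_k \times r_k}$ as the object to be reduced, since it is exactly the $d_k$-dimensional mode that the Stiefel factor $\mat{U}_k$ must absorb.

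First I would unfold each core along its second (physical) mode, forming $\mat{G}^{(k)}_{(2)} \in \matr^{d_k \times r_{k-1} r_k}$, whose rows are indexed by $i_k$ and whose columns are indexed by the bond pair $(\upsilon_{k-1},\upsilon_k)$. A thin SVD (or economy QR) then gives an exact factorization $\mat{G}^{(k)}_{(2)} = \mat{U}_k \mat{B}^{(k)}$ with $\mat{U}_k \in St(d_k, R_k)$, $\trans{\mat{U}_k}\mat{U}_k = \mat{I}_{R_k}$, and $R_k = \rank{\mat{G}^{(k)}_{(2)}} \le \min(d_k, r_{k-1} r_k)$. Folding the residual factor $\mat{B}^{(k)} \in \matr^{R_k \times r_{k-1} r_k}$ back over the bond indices yields the compressed core $\tgkh \in \matr^{r_{k-1}\times R_k \times r_k}$, equivalent element-wise to $G^{(k)}(\upsilon_{k-1}, i_k, \upsilon_k) = \sum_{j_k=1}^{R_k} U_k(i_k, j_k)\, \tilde{G}^{(k)}(\upsilon_{k-1}, j_k, \upsilon_k)$.

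Substituting this identity into the TT product and interchanging the finite sums, the $K$ orthonormal factors $U_k(i_k,j_k)$ detach from the bond contraction, giving
\begin{equation}
  X(i_1,\ldots,i_K) = \sum_{j_1,\ldots,j_K} \Big(\prod_{k=1}^K U_k(i_k, j_k)\Big)\, \tilde{X}(j_1,\ldots,j_K),
\end{equation}
where $\tilde{X}(j_1,\ldots,j_K) = \sum_{\upsilon_0,\ldots,\upsilon_K} \prod_k \tilde{G}^{(k)}(\upsilon_{k-1}, j_k, \upsilon_k)$ is precisely $\txh = \mcode{TT}(\tgh^{(1)},\ldots,\tgh^{(K)})$. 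Recognizing the right-hand side as the multilinear product $\txh \times_1 \mat{U}_1 \times_2 \cdots \times_K \mat{U}_K$ closes the argument; and because the compression acts only on the physical modes, the bond dimensions $r_1,\ldots,r_{K-1}$ are unchanged, so $\txh$ retains the original TT rank.

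The step I expect to require the most care is the index bookkeeping in the substitution: I must verify that the SVD is taken strictly along mode $2$ of each core and never mixes the auxiliary indices $\upsilon_{k-1},\upsilon_k$, so that factoring out the $\mat{U}_k$ reproduces exactly the $k$-mode Tucker products without altering the TT contraction. The bound $R_k \le \min(d_k, r_{k-1} r_k)$ is the quantitative heart of the result, since it is what guarantees the core $\txh$ is genuinely smaller and hence makes the subsequent TTNN and SVD computations cheaper; I would therefore flag it explicitly rather than leave it implicit.
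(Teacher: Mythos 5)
Your proposal is correct and follows essentially the same route as the paper: factor the mode-2 (physical-mode) unfolding of each TT core into an orthonormal factor times a residual, fold the residual back into a smaller core, and re-absorb the orthonormal factors as mode-$k$ Tucker products after interchanging the finite sums. The only cosmetic difference is that you apply a thin, rank-revealing SVD uniformly with $R_k=\rank{\mat{G}^{(k)}_{(2)}}$, whereas the paper splits into two cases (SVD when $d_k > r_k r_{k+1}$, and $\mat{U}_k=\mat{I}_{d_k}$ otherwise); both arguments yield the same conclusion, yours with a marginally tighter bound on $R_k$.
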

\begin{proof}
The proof can be completed by discussing in the following two cases. 

\emph{Case 1:} If $d_k > r_k r_{k+1}$,  we let $\mat{U}_k$ be the left singular matrix of mode-2 unfolding of  $\tgk$, i.e., $[\mat{U}_k, \bm{\Sigma}_k, \mat{V}_k^{\top}] = \mcode{SVD}(   \mg^{(k)}_{(2)}  ) $, and we let $\tgh^{(k)} = \mcode{sfold}_2(\bm{\Sigma}_k \mat{V}_k^\top)$. 

\emph{Case 2:} If $d_k \leq r_k r_{k+1}$, we let $\mat{U}_k$ be the identity matrix, i.e., $\mat{U}_k = \mat{I}_{d_k}$, and $\tgh^{(k)} = \tgk$.

Combing these two cases, the core tensor of TT decomposition can be given by $\tgk = \tgh^{(k)} \times_2 \mat{U}_k, k \in [K]$. Element-wisely, we can present  TT decomposition of $\tx$ as
\begin{equation}
	\begin{split}
			& X({i_1,\cdots, i_K}) \\
			 = &   \mg^{(1)}(:,i_1,:)  \mg^{(2)}(:,i_2,:) \cdots  \mg^{(K)}(:,i_K,:)  \\
			=&  \sum_{j_1=1}^{R_1} \cdots \sum_{j_K=1}^{R_K}  \mgh^{(1)} (:,j_1,:)  {U}_1(j_1, i_1) \\
			& \mgh^{(2)} (:,j_2,:)  {U}_2(j_2, i_2) \cdots \mgh^{(K)}(:,j_K,:)  {U}_K(j_K, i_K) \\
			=&  \sum_{j_1=1}^{R_1} \cdots \sum_{j_K=1}^{R_K}  \mgh^{(1)} (:,j_1,:) \mgh^{(2)} (:,j_2,:) \cdots  \\
			&   \mgh^{(K)}(:,j_K,:)   {U}_1(j_1, i_1)   {U}_2(j_2, i_2) \cdots  {U}_K(j_K, i_K) \\
			=& \sum_{j_1=1}^{R_1} \cdots \sum_{j_K=1}^{R_K} \mcode{TT}(\tgh^{(1)}, \cdots, \tgh^{(K)})(i_1, \cdots, i_K) \\ 
			&   {U}_1(j_1,i_1) \cdots  {U}_K(j_K,i_K)\\
		=& \mcode{TT}(\tgh^{(1)}, \cdots, \tgh^{(K)}) \times_1 \mat{U}_1(:,i_1)\cdots \times_K\mat{U}_K (:,i_K). \\
%		=& \txh \times_1 \mat{U}_1(:,i_1)\cdots \times_K\mat{U}_K (:,i_K). 
	\end{split} 
\end{equation}
Thus, we have $\tx = \txh \times_1 \mat{U}_1 \cdots \times_K \mat{U}_K$, where $\txh = \mcode{TT}(\tgh^{(1)}, \tgh^{(2)},\cdots,\tgh^{(K)} )$  and $\mat{U}_k \in St(d_k, R_k), k \in [K]$. 

Proof of Theorem \ref{theorem-rank-property} is completed.
\end{proof}

% Equipped with Theorem \ref{theorem-rank-property},  we show that tensor data with low-rank TT decomposition can also be represented in a Tucker decomposition format. 
In the next theorem, we demonstrate that under a Tucker compression format, TTNN of the original tensor can be equivalently converted to that of a much smaller tensor.

\begin{theorem} \label{theorem-ttnn-property}
Let $\tx \in \matr^{d_1\times \cdots \times d_K}$ be a $K$th-order tensor with TT rank $[r_1, \cdots r_{K-1}]$, the TTNN of $\tx$ can be given by
	\begin{equation}
		\| \tx\|_{\emph{ttnn}} = \| \txh \|_{\emph{ttnn}}, 
	\end{equation}
where $\tx = \txh \times_{1} \mat{U}_1   \cdots  \times_K \mat{U}_K, \mat{U}_k \in St(d_k, R_k), k \in [K]$. 
\end{theorem}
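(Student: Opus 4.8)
The plan is to reduce the equality of the two TTNN values to a single-mode statement. Since, by Definition~\ref{def-ttnn}, $\| \tx \|_{\text{ttnn}} = \sum_{k=1}^{K-1} \alpha_k \| \mxk \|_*$ and $\| \txh \|_{\text{ttnn}} = \sum_{k=1}^{K-1} \alpha_k \| \mxhk \|_*$ use the same weights $\alpha_k > 0$, it suffices to establish
\begin{equation}
	\| \mxk \|_* = \| \mxhk \|_*, \qquad k \in [K-1],
\end{equation}
after which multiplying by $\alpha_k$ and summing over $k$ closes the argument.

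First I would translate the Tucker compression $\tx = \txh \times_1 \mat{U}_1 \cdots \times_K \mat{U}_K$ (from Theorem~\ref{theorem-rank-property}) into a factored form of each TT-unfolding. Grouping the row modes and column modes of $\mat{X}_{[k]}$ exactly as in its index linearization, the multilinear product becomes a Kronecker sandwich
\begin{equation}
	\mxk = \mat{P}_k \, \mxhk \, \mat{Q}_k^{\top},
\end{equation}
where $\mat{P}_k$ is the Kronecker product of the factors $\mat{U}_j$ attached to the row modes and $\mat{Q}_k$ is the Kronecker product of the factors attached to the column modes. This is the standard relation between a mode-product and its unfoldings; the only subtlety is matching the Kronecker ordering to the specific row/column split used to define $\mat{X}_{[k]}$.

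Next I would exploit the Stiefel constraint $\mat{U}_j^{\top}\mat{U}_j = \mat{I}_{R_j}$. By the mixed-product property of the Kronecker product, $\mat{P}_k^{\top}\mat{P}_k$ and $\mat{Q}_k^{\top}\mat{Q}_k$ factor into Kronecker products of the $\mat{U}_j^{\top}\mat{U}_j$, each of which is an identity, so both $\mat{P}_k$ and $\mat{Q}_k$ have orthonormal columns. The key lemma is then the invariance of the nuclear norm under left- and right-multiplication by such matrices: if $\mat{P}^{\top}\mat{P} = \mat{I}$ and $\mat{Q}^{\top}\mat{Q} = \mat{I}$, then $(\mat{P}\mat{A}\mat{Q}^{\top})(\mat{P}\mat{A}\mat{Q}^{\top})^{\top} = \mat{P}\,\mat{A}\mat{A}^{\top}\,\mat{P}^{\top}$ has the same nonzero eigenvalues as $\mat{A}\mat{A}^{\top}$, so $\mat{P}\mat{A}\mat{Q}^{\top}$ and $\mat{A}$ share the same singular values and hence the same nuclear norm. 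Applying this with $\mat{A} = \mxhk$ yields the per-mode identity, and summing with weights $\alpha_k$ gives $\| \tx \|_{\text{ttnn}} = \| \txh \|_{\text{ttnn}}$.

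The analytic heart of the argument, namely that singular values are preserved under semi-orthogonal multiplication, is elementary. I expect the main obstacle to be bookkeeping: carefully verifying the Kronecker-sandwich identity for $\mat{X}_{[k]}$, because each TT-unfolding aggregates several Tucker factors into its row block and several into its column block, so the Kronecker orderings and index maps must be tracked precisely against the definition of the TT-unfolding. Once that identity is nailed down, the orthonormality of $\mat{P}_k$ and $\mat{Q}_k$ together with the invariance lemma finish the proof immediately.
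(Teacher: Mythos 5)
Your proposal is correct and follows essentially the same route as the paper's proof: the paper likewise writes each TT-unfolding as a Kronecker sandwich $\mxk = (\mat{U}_k \otimes \cdots \otimes \mat{U}_1)\, \mxhk \,(\mat{U}_K \otimes \cdots \otimes \mat{U}_{k+1})^{\top}$ (citing a lemma from prior work), uses the mixed-product property to show these Kronecker factors have orthonormal columns, and concludes the per-mode equality $\| \mxk \|_* = \| \mxhk \|_*$ before summing with the weights $\alpha_k$. Your only addition is making explicit the invariance of singular values under semi-orthogonal left/right multiplication, which the paper leaves implicit.
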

\begin{proof}
	According to  Lemma 3 in \cite{Ee2014}, we have 
	\begin{equation} \label{eq-tt-unfold-xhat}
		\mxk = (\mat{U}_k \otimes \cdots \otimes \mat{U}_1) \mxhk (\mat{U}_K \otimes \cdots \otimes \mat{U}_{k+1} )^{\top}. 
	\end{equation}
Note that the Kronecker product of multiple orthogonal matrices is still orthogonal matrix, that is, 
\begin{equation} \label{eq-orthogonal-kronecker}
	\begin{split}
			 & (\mat{U}_k \otimes \cdots \otimes \mat{U}_1) ^{\top}  (\mat{U}_k \otimes \cdots \otimes \mat{U}_1)  \\
			 &  = ( \mat{U}_k^{\top} \mat{U}_k \otimes \cdots \otimes \mat{U}_1^{\top} \mat{U}_1 )   = \mat{I}_{M},
	\end{split}
\end{equation}
where $M=\prod_{j=1}^{k} {R_j} $. Combining Eq. (\ref{eq-orthogonal-kronecker}) and Eq. (\ref{eq-tt-unfold-xhat}), we have 
\begin{equation}
	\| \mxk   \|_* = \| \mxhk \|_*, k \in [K-1]. 
\end{equation}
% Thus, we can obtain that  
% \begin{equation}
% 	\sum_{k=1}^{K-1} \alpha_k \| \mxk   \|_* = 	\sum_{k=1}^{K-1} \alpha_k \| \mxhk \|_*. 
% \end{equation}

Proof of Theorem \ref{theorem-ttnn-property} is completed.
\end{proof}

\begin{algorithm}[t] 
	\caption{Optimization Algorithm for FTTNN-based \mbox{TRPCA} Model}
	%   \caption{Low-rank Tensor Completion via Latent Matrix Factorization}
	\begin{algorithmic}[1]
		\renewcommand{\algorithmicrequire}{\textbf{input}}
		\REQUIRE $\ty, \tau.$
		\renewcommand{\algorithmicrequire}{\textbf{initialize}}
		\REQUIRE $\mu=10^{-2}, \mu_{\text{max}}=10^{10}, \rho=1.1, tol=10^{-8}$, $\tgk$ using $\mathcal{N}(0,1)$ distribution.
		\WHILE {not converge}
		%		\STATE Update $\{ \tmk \}_{k=1}^K$ by (\ref{eq-update-tmk}).  
		\STATE Update {\small $\tm^{k,t+1} = \mcode{fold}_k  \Big( \mathcal{D}_{\alpha_k / \mu } ( \mxhk^{t} - \frac{1}{\mu} \mat{Q}_{[k]}^{k,t}  ) \Big)$,} where $\mathcal{D}_{\alpha_k/\mu} (\cdot)$ denotes the SVT operator \cite{cai2010singular}. 
		%		\STATE Update ;$\tx$ by (\ref{eq-update-tx}).
		\STATE Update {  \small  $\tx^{t+1}  = \ty^t - \ts^t + \txh^t \times_1 \mat{U}_1^t \cdots \times_K \mat{U}_K^t + \frac{1}{\mu} (\te^t - \tp^t ) .$ }
		%		\STATE Update $\ts$ by (\ref{eq-update-ts}).
		\STATE  Update  { \small $\ts^{t+1} = \mathcal{S}_{\tau} ( \ty^t -\tx^{t+1} + \frac{1}{\mu} \te^{t} )$}, where $\mathcal{S}_{\tau}$ denotes the soft-shrinkage operator \cite{Beck2009}.  
		\STATE Update { \small   $	\txh^{t+1} = \frac{1 }{K\mu} \Big(  ( \mu \tx^{t+1} +\tp^{t}) \times_1 \mat{U}_1^{t,\top}\cdots \times_K \mat{U}_K^{t,\top} + \sum_{k=1}^{K} \mu \tensor{M}^{k,t} +   \tensor{Q}^{k,t} \Big ) $}.
		%		\STATE Update $\txh$ by (\ref{eq-update-txh})
		%		\STATE Update $\mat{U}_k$ by ($\ref{eq-update-u}$)
		\STATE Update {\small $\mat{U}_k^{t+1} = \mat{A}_k \mat{B}_k$}, where {\small $( \frac{1}{\mu}\mat{P}_{(k)}^t + \mx_{(k)}^{t+1} )  \mat{V}_{(k)} = \mat{A }_k \mat{D}_k \mat{B}_k$}  is the SVD of {\small $( \frac{1}{\mu}\mat{P}_{(k)} + \mx_{(k)})  \mat{V}_{(k)}$}, and {\small $\tensor{V} = \tx^{t+1} \times_1 \mat{U}_1^{t+1,\top} \cdots \times_{k-1} \mat{U}_{k-1}^{t+1,\top} \times_{k+1} \mat{U}_{k+1}^{t,\top} \cdots \mat{U}_K^{t,\top}$. }
		\STATE Update {\small  $\tq^{k,t+1} = \tq^{k,t} + \mu ( \tm^{k,t+1} - \txh^{t+1}), k \in [K-1]$. }
		\STATE Update { \small   $\tp^{t+1} = \tp^{t} + \mu ( \tx^{t+1} - \txh^{t+1} \times_1 \mat{U}_1^{t+1} \times_2 \cdots  \times_K \mat{U}_K^{t+1} )$. } 
		\STATE Update $\mu  = \max(\rho \mu, \mu_{\text{max}})$.
		\STATE Check the convergence condition: 
		\STATE   $\max   \big ( \frac{\| \ts_{t+1}  - \ts_t \|_F}{\| \ts_t  \|_F}, \frac{\| \tx_{t+1}  - \tx_t \|_F}{\| \tx_t  \|_F}   \big) \leq tol$. 
		\STATE $t  \leftarrow t+1$.
		\ENDWHILE
		%		\RETURN  $\tx_{t+1}, \ts_{t+1}$.
	\end{algorithmic} 
	\label{alg-admm-tc}
\end{algorithm}

\subsection{Efficient TRPCA under Hybrid Model of Tucker and TT}
TRPCA aims to recover the low-rank and sparse components from their sum. The low-rank TT-based TRPCA model can be formulated as
\begin{equation} \label{eq-rtpca-ttnn}
\min_{\tx, \ts}   \| \tx \|_{\text{ttnn}} + \tau \| \ts\|_1\\
	\text{ s.t. } \ty = \tx + \ts, 
\end{equation}
where $\tau>0$ denotes the hyper-parameter. Combining Theorem \ref{theorem-rank-property} and Theorem \ref{theorem-ttnn-property},  problem (\ref{eq-rtpca-ttnn}) can be equivalently given in a fast TTNN (FTTNN) minimization format:
%minimization of TTNN on $\tx$ can be transformed into the following form
% the above optimization problem, we relax the overlapped constraint into the following term:
\begin{equation} \label{eq-ttnn-rpca-smaller-size}
	\begin{split}
			& \min_{\tx, \txh, \ts, \{\mat{U}_k \}_{k=1}^K }   \| \txh \|_{\text{ttnn}} + \tau \| \ts\|_1\\
	\text{s.t. } &\tx = \txh \times_1 \mat{U}_1 \times_2 \cdots \times_K \mat{U}_K, \\
	&  \ty = \tx+  \ts,  \mat{U}_k \in St(I_k, R_k).
	\end{split}
\end{equation}
By incorporating auxiliary variables $\{ \tmk \}_{k=1}^K$,  the augmented Lagrangian function of problem (\ref{eq-ttnn-rpca-smaller-size}) is given by
%To solve the above optimization problem, we develop an alternating direction method of multipliers (ADMM)-based algorithm, whose augmented Lagrangian function is given by
\begin{equation} \label{eq-augmented-lagrangian-function}
		\begin{split}
& L_{\mu}(  \{ \tmk\}_{k=1}^{K-1}, \txh, \ts, \{ \mat{U}_k \}_{k=1}^{K}, \{ \tqk\}_{k=1}^{K-1}, \te, \tp )  \\
= &  \sum_{k=1}^{K-1} \alpha_k \| \mmk \|_{*} + \tau \| \ts\|_1  \\
		 & +  \sum_{k=1}^{K-1} \langle \tqk, \tmk - \txh \rangle + \frac{\mu}{2} \| \tmk - \txh \|_F^2 \\
		 & + \langle \ty - \tx - \ts, \te\rangle  +  \frac{\mu}{2} \| \ty - \tx - \ts  \|_F^2 \\
 		& + \langle \tp,  \tx - \txh \times_1 \mat{U}_1 \times_2 \cdots \times_K \mat{U}_K \rangle  \\
		 &  + \frac{\mu}{2} \| \tx - \txh \times_1 \mat{U}_1 \times_2 \cdots \times_K \mat{U}_K  \|_F^2 \\
		\text{s.t. } & \mat{U}_k  \in St(I_k, R_k), k \in [K],
	\end{split}
\end{equation}
where $\mu > 0$ denotes penalty parameter, $\{\tqk \}_{k=1}^{K}, \te$  and $\tp$ are Lagrange multipliers. All   variables of Eq. (\ref{eq-augmented-lagrangian-function}) can be solved separately based on ADMM method \cite{Boyd2010a}. The update details are summarized in Algorithm \ref{alg-admm-tc}.

Compared with the time complexity $\mathcal{O}(d^{3K/2})$ in  TTNN of the original big tensor, the proposed FTTNN-based TRPCA model only requires time complexity $\mathcal{O}(KR^2d^{K-1} + KR^{3K/2}+KRd^K)$, which will  greatly accelerate the optimization algorithm if the given rank $R$ is significantly low.

\begin{table}[]
	\centering
	\caption{Recovery Results of the Proposed FTTNN and TTNN on Synthetic Tensor  Data}
	\label{table-synthetic-data}
	\resizebox{\linewidth}{27mm}{
	\begin{tabular}{c|c|c|cccc} \hline  \hline 
		d                   & TT rank               & $\text{NR}$                    & Alg &  {RSE}-$\tx$ & {RSE}-$\ts$ & Time \\ \hline 
		\multirow{8}{*}{30} & \multirow{4}{*}{3} & \multirow{2}{*}{5\%} & TTNN  & 1.52e-8  &  1.36e-10     & 18.05    \\
		&                    &                       & FTTNN (ours) &    \textbf{1.83e-9}     & \textbf{3.63e-11}      &  \textbf{7.29}      \\  \cline{3-7}
		&                    & \multirow{2}{*}{10\%}  & TTNN  & 1.54e-8     &   1.30e-10   & 20.34     \\
		&                    &                       & FTTNN (ours) & \textbf{1.41e-9}     &  \textbf{2.31e-11}     &  \textbf{7.81}     \\ \cline{2-7}
		& \multirow{4}{*}{4} & \multirow{2}{*}{5\%} & TTNN  & 1.52e-8     & 2.09e-10     &  17.64     \\
		&                    &                       & FTTNN (ours) &  \textbf{1.52e-9}    & \textbf{4.72e-11}      &  \textbf{8.29}      \\\cline{3-7}
		&                    & \multirow{2}{*}{10\%}  & TTNN  & 1.49e-8     & 2.00e-10     & 19.03     \\
		&                    &                       & FTTNN (ours) &   \textbf{1.06e-9}   &  \textbf{2.69e-11}     &  \textbf{8.47}     \\\cline{1-7}
		\multirow{8}{*}{40} & \multirow{4}{*}{4} & \multirow{2}{*}{5\%} & TTNN  &  1.46e-8     &   1.07e-10    &  129.69     \\
		&                    &                       & FTTNN (ours) &  \textbf{1.89e-9}    & \textbf{3.40e-11}     &  \textbf{23.27}     \\ \cline{3-7}
		&                    & \multirow{2}{*}{10\%}  & TTNN  &  1.31e-8    & 9.64e-11    & 122.99    \\
		&                    &                       & FTTNN (ours) & \textbf{1.26e-9}     & \textbf{1.77e-11}      & \textbf{24.46}     \\ \cline{2-7}
		& \multirow{4}{*}{5} & \multirow{2}{*}{5\%} & TTNN  & 1.31e-8     &  1.40e-10    &   112.36   \\
		&                    &                       & FTTNN (ours) &  \textbf{1.45e-9}    & \textbf{3.60e-11}      & \textbf{28.58}      \\ \cline{3-7}
		&                    & \multirow{2}{*}{10\%}  & TTNN  & \textbf{1.32e-8}      &  \textbf{1.34e-10}    &117.01      \\
		&                    &                       & FTTNN (ours) &  5.46e-7  & 1.25e-8    & \textbf{27.72}     \\
		 \hline  \hline 
	\end{tabular} }
\end{table}

\section{Numerical  Experiments}
In this section, we present  the numerical experiment results on synthetic tensor data  as well as color image and video data, of the proposed and state-of-the-art model, namely BRTF  \cite{Zhao2016}, SNN  \cite{Goldfarb2014}, tSVD \cite{zhang2014novel} and TTNN \cite{Yang2020a}. 
All experiments are tested with respect to different sparse noise ratios (NR), which is  given by $\text{NR}=N/\prod_{k=1}^{K} d_k \times 100\%$, where $N$ denotes the number of sparse component entries.
The relative standard error (RSE) is adopted as a performance metric, and is given by $\text{RSE} = \| \tx^* - \tx_0\|_F / \| \tx_0\|_F$, where $\tx^*$ and $\tx_0$ are the estimated and true tensor, respectively. 
Matlab implementation of the proposed method is publicly available \footnote{https://github.com/ynqiu/fast-TTRPCA}. 

\subsection{Synthetic Data}
 We generate a low-rank tensor $\tx_0 \in \matr^{d \times d \times d\times d}$ by TT contraction  \cite{Wen2008} with TT rank $[r, r, r]$. The entries of each core tensor are generated by i.i.d. standard Gaussian distribution, i.e.,  $ g_{v_k, i_k, v_{k+1}}^{(k)} \sim   \mathcal{N}(0, 1), k \in [K]$. The support of sparse noise $\ts_0$ is uniformly sampled at random. For $(i_1, i_2, i_3, i_4) \in \text{supp}(\ts_0)$, we let $\ts_0(i_1, i_2, i_3, i_4) = \tb(i_1, i_2, i_3, i_4)$, where $\tb$ is generated by the independent  Bernoulli distribution. The observed tensor is formed by $\ty = \tx_0 + \ts_0$. The parameter $\tau =1/(K-1) \sum_{k=1}^{K-1} 1/ \sqrt{ \max ( d_{1,k}, d_{2,k} )} $. For the weight $\alpha_k, k \in [K-1],$ we adopt the same strategy used in \cite{Bengua2017a}.
% The recovered quality of the estimated tensor is used relative squared error (RSE): $\text{RSE}: \| \tx^* - \tx_0 \|_F / \| \tx_0 \|_F$. 

\subsubsection{Effectiveness of the Proposed FTTNN-based TRPCA} To verify  the effectiveness of the proposed FTTNN-based  TRPCA, we conduct   experiments on multiple conditions. We let $d \in \{30, 40\}$, and $r\in \{ 3,4,5\}$. The sparse noise ratio is selected in a candidate set: $ \text{NR} \in \{ 5\%,10\%\}$. The given rank is set to $R_1 = R_4 = \mcode{round}(1.2r)$ and $R_2 = R_3 =  \mcode{round}(1.2r^2) $. For each fixed setting, we repeat  the experiment 10 times and report their average. Table \ref{table-synthetic-data} shows the  results of FTTNN and TTNN on synthetic tensor data.  As can be seen, FTTNN provides  lower RSE on both  low-rank $\tx$ and  sparse $\ts$ components compared with TTNN in most cases. Moreover, FTTNN  is at least 2 times faster than TTNN when $d=30$ and at least 4 times faster than TTNN when $d=40$.

\begin{figure}[t]
	\centering
	\subfigure[]{
	\begin{minipage}[t]{0.325\linewidth}
			\centering
		\includegraphics[width=\linewidth]{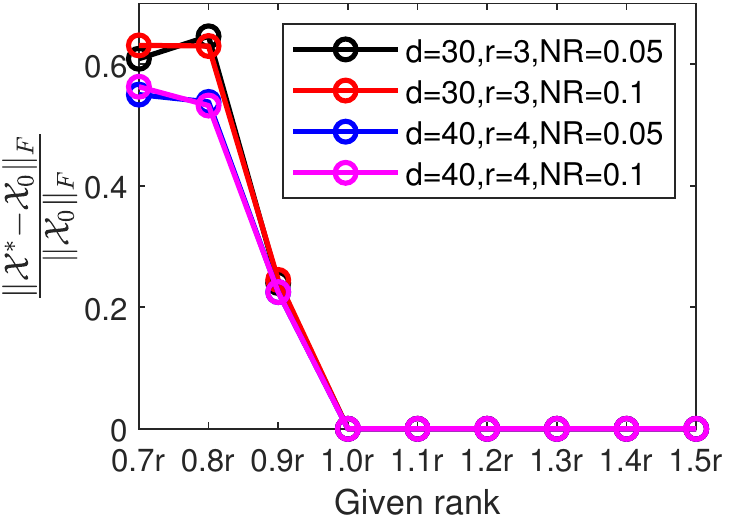}
	\end{minipage}} 
\hspace{-1.2em}
\subfigure[]{
	\begin{minipage}[t]{0.325\linewidth}
	\centering
	\includegraphics[width=\linewidth]{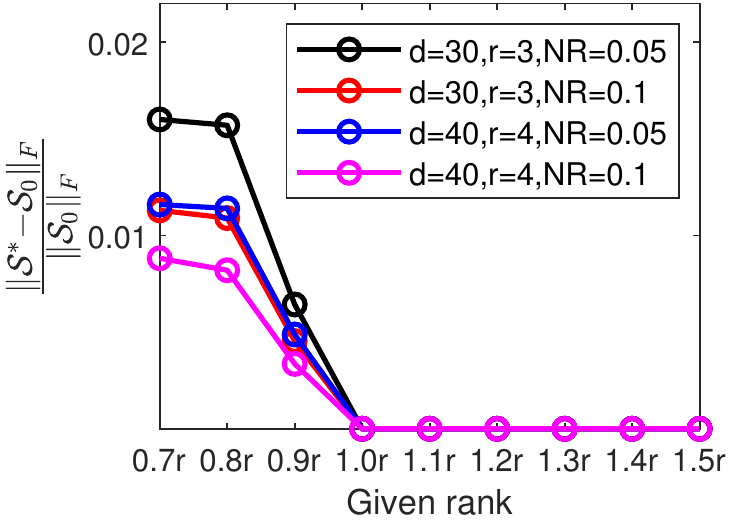}
\end{minipage}}
\hspace{-1.2em}
\subfigure[]{
	\begin{minipage}[t]{0.325\linewidth}
	\centering
	\includegraphics[width=\linewidth]{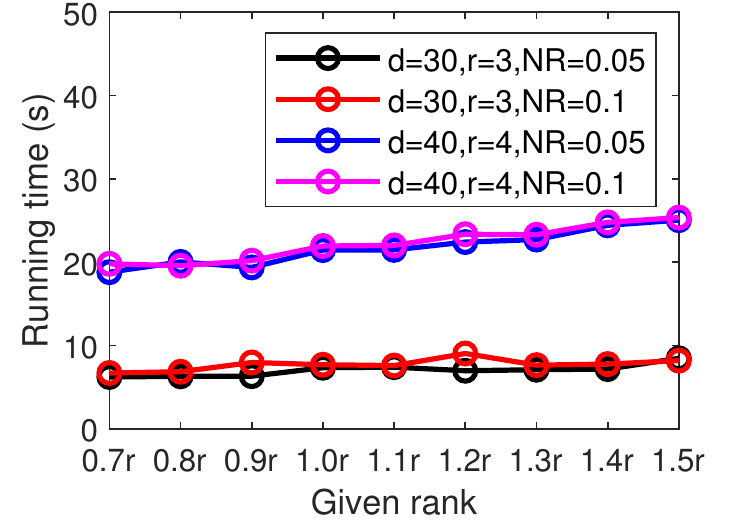}
\end{minipage} }
\caption{Plots of RSE  of recovered $\tx^*$, $\ts^*$ and running time versus varying initialization of rank. (a) RSE-$\tx$ versus different ranks; (b) RSE-$\ts$ versus different ranks; (c) Running time versus different ranks.}
\label{fig-plot-synthetic-data}
\end{figure}

\subsubsection{Robustness of the Given Rank} In this part, we investigate robustness of the given rank for FTTNN. Similar with the above simulation, we set $d \in \{ 30, 40\}$, $r \in \{ 3, 4\}$ and let the sparse noise ratio  $\text{NR}\in \{ 5\%, 10\%\}$. The  given rank of the proposed model is set to $R_1 = R_4 = \mcode{round}(qr)$ and $R_2 = R_3 =  \mcode{round}(qr^2) $ with $q \in \{0.7,0.8, \cdots, 1.5\}$. In Fig. \ref{fig-plot-synthetic-data}, we plot its average RSE and running time versus different given ranks. It can be observed that the proposed FTTNN achieves stable recovery performance versus different given ranks if $q\geq 1$, which verifies the correctness of Theorem \ref{theorem-ttnn-property}. Additionally, running time of the proposed FTTNN grows slowly as the given rank increases. 

%obtains much lower running time compared with TT-RTD. 

 \begin{figure}[t]
	\centering
	\includegraphics[width=0.9\linewidth]{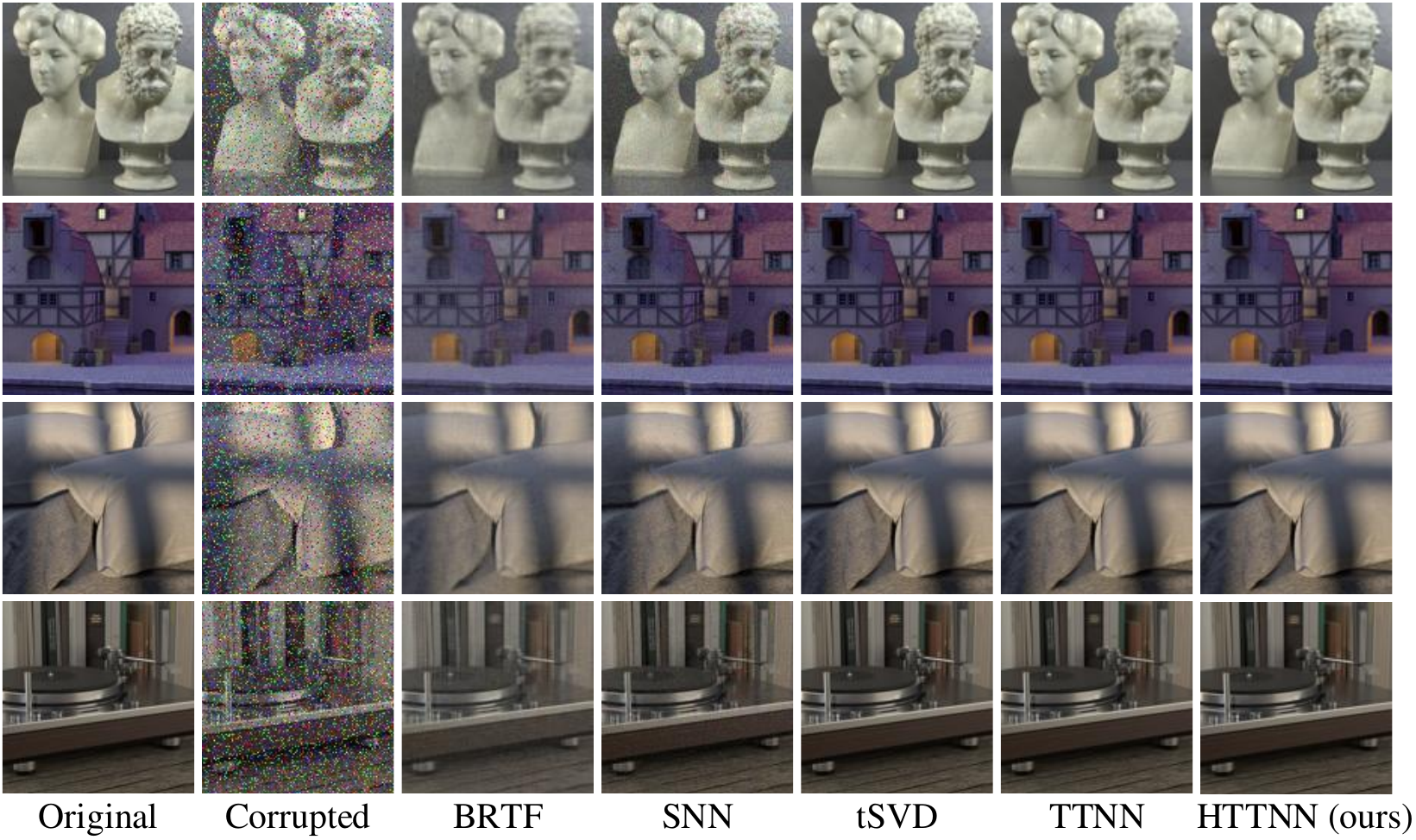}
	\caption{Visual performance of the proposed FTTNN compared with related BRTF, SNN, tSVD and TTNN (best seen in larger resolution on monitor). }
	\label{fig-light-field-frame}
\end{figure}

\begin{table}[t]
	\centering
	\caption{Average  Recovery Results of Compared Models on Four Benchmark Images}
	\resizebox{0.9\linewidth}{11mm}{
	\begin{tabular}{l | c | c |c | c |c} \hline \hline 
		Models &  greek   & medieval2  & pillows & vinyl &   Avg. time     \\ \hline  
  		Noisy   & 0.2753   &   0.5175 & 0.3364  & 0.5854 &   -   \\
		BRTF   & 0.0934    &   0.1666  & 0.0941 & 0.1965 & 112.75 \\
		SNN    & 0.0733  &   0.0859 & 0.0535  & 0.0852 & 930.04 \\
		tSVD    & 0.0516 &    0.0497 & 0.0383  & 0.0524  & 71.67 \\
		TTNN   & 0.0447  &  0.0403  & 0.0310   & 0.0333 & 104.68  \\
		FTTNN (ours)  & \textbf{0.0354}  &  \textbf{0.0348}  &   \textbf{0.0207}  & \textbf{0.0286} &  \textbf{53.13}  \\
		\hline \hline 
	\end{tabular} } 
	\label{table-light-field-images}
\end{table}

\subsection{Robust Recovery of Noisy Light Field Images}
In this part, we conduct robust noisy light field image recovery experiment on four light field benchmark images\footnote{https://lightfield-analysis.uni-konstanz.de/}, namely,  ``greek",``medieval2",``pillows" and ``vinyl". The dimensions of each image is down sampled to $128\times 128 \times 3 \times 81$. The given rank of the proposed model is set to $[80,80,3,10]$ and the weight $\alpha$ is set to $[0.1,0.8,0.1]$. For each image, we randomly select $20\%$ entries with their values being randomly distributed in $[0,255]$.  Fig. \ref{fig-light-field-frame} depicts the $50$th recovered image, i.e., $\ty(:,:,:,50)$. From Fig. \ref{fig-light-field-frame}, we can observe that the results obtained by the proposed FTTNN are superior to the compared models, especially for the recovery of local details. 
Table \ref{table-light-field-images} shows the recovered RSE and average running time on four benchmark images.  Compared with the state-of-the-art models, the proposed FTTNN achieves both minimum RSE and average running time in all light field images, which indicates its efficiency. 
% TTNN obtains the second-best performance in RSE metric, verifying the superiority of TT rank-based models. 

\begin{figure}
	\centering
	\subfigure[]{
		\begin{minipage}[t]{0.42\linewidth}
			\centering
			\includegraphics[width=\linewidth]{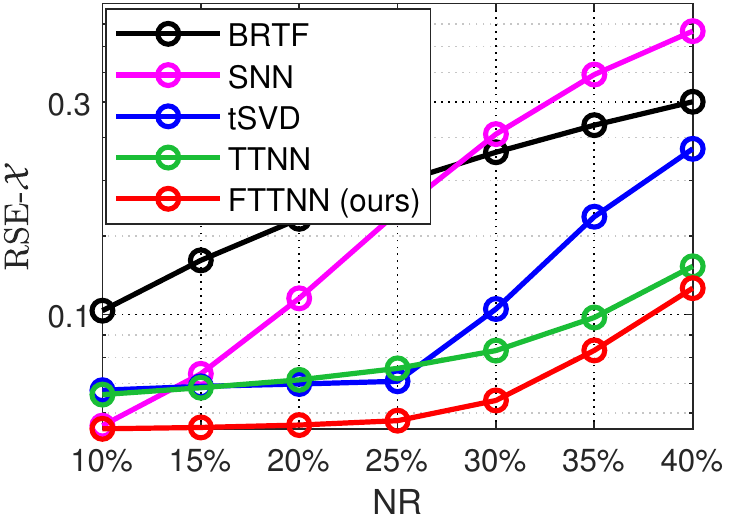}
	\end{minipage}}
%	\hspace{0.2em}
	\subfigure[]{
		\begin{minipage}[t]{0.42\linewidth}
			\centering
				\includegraphics[width=\linewidth]{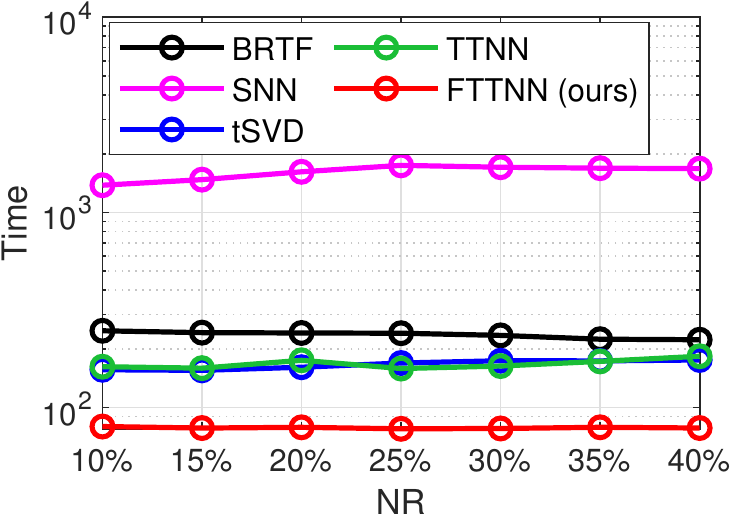}
	\end{minipage} }
	\caption{Plots of average RSE and running time of compared models under different NR. (a) RSE versus   NR; (b) running time versus   NR.}
	\label{fig-plot-NR-data}
\end{figure}

\subsection{Robust Recovery of Noisy Video Sequences}
In this part, we investigate the robust recovery performance of the compared models on five YUV video sequences\footnote{http://trace.eas.asu.edu/yuv/}, namely, ``akiyo",``bridge",``grandma", ``hall" and ``news". We select the first 100 frames of videos. Thus, the video data are the fourth-order tensors of size $144\times 176 \times 3\times 100$. The given rank and weight are set to the same as the above section.
For each video, we set the noise ratio $\text{NR} \in \{10\%, \cdots, 40\% \}$. 
% We set the given rank of FTTNN to $[80,80,3,10]$, and set the weight $\alpha$ to $[0.1,0.8,0.1]$. 
Fig. \ref{fig-plot-NR-data} shows the average RSE of five videos versus different NR. The proposed FTTNN obtains the lowest average RSE on five videos compared with state-of-the-art models.
Additionally, our model is the fastest models, and is at least two times faster than TTNN. Since SNN, TTNN and tSVD have to compute SVD or tSVD on the original large-scale video data, they run slower.

\section{Conclusion}
In this letter, an efficient TRPCA model is proposed based on low-rank TT. By investigating the relationship between Tucker decomposition and TT decomposition, TTNN of the original big tensor is proved to be equivalent to that of a much smaller tensor under a Tucker compression format, thus reducing the computational cost of SVD operation. 
Experimental results show that the proposed model outperforms the state-of-the-art models in terms of RSE and running time. 

\bibliographystyle{IEEEbib}
\bibliography{FastTTRPCA}

\end{document}